\newcommand\norm[1]{\left\lVert#1\right\rVert}
\newcommand\abs[1]{\left\lvert#1\right\rvert}
\newcommand{\ie}{i.e.}
\DeclareMathOperator*{\argmin}{arg\,min}
\newcommand{\keywords}[1]{\par\addvspace\baselineskip
\noindent\keywordname\enspace\ignorespaces#1}
\newcommand{\ah}[1]{{\color{red}#1}}
\begin{document}
\allowdisplaybreaks
\mainmatter  

\title{Theoretical Analysis of Domain Adaptation with Optimal Transport}
\titlerunning{Theoretical Analysis of Domain Adaptation with Optimal Transport}

%
%

\author{Ievgen Redko\inst{1} \and Amaury Habrard\inst{2} \and Marc Sebban \inst{2}}

%
%
%

\institute{Univ.Lyon, INSA‐Lyon, Universit\'e Claude Bernard Lyon 1, UJM-Saint Etienne\\
CNRS, Inserm, CREATIS UMR 5220, U1206\\
F-69266, LYON, France\\
\email{ievgen.redko@creatis.insa-lyon.fr}\\ 
\and
Univ. Lyon, UJM-Saint-Etienne\\
   CNRS, Lab. Hubert Curien UMR 5516\\
   F-42023, SAINT-ETIENNE, France\\
   \email{\{amaury.habrard,marc.sebban\}@univ.st-etienne.fr}}


%

%
%

\maketitle

\begin{abstract}
Domain adaptation (DA) is an important and emerging field of machine learning that tackles the problem occurring when the distributions of training (source domain) and test (target domain) data are similar but different. This kind of learning paradigm is of vital importance for future advances as it allows a learner to generalize the knowledge across different tasks. Current theoretical results show that the efficiency of DA algorithms depends on their capacity of minimizing the divergence between source and target probability distributions. In this paper, we provide a theoretical study on the advantages that concepts borrowed from optimal transportation theory \cite{monge_81} can bring to DA. In particular, we show that the Wasserstein metric can be used as a divergence measure between distributions to obtain generalization guarantees for three different learning settings: (i) classic DA with unsupervised target data (ii) DA combining source and target labeled data, (iii) multiple source DA. Based on the obtained results, we motivate the use of the regularized optimal transport and provide some algorithmic insights for multi-source domain adaptation. We also show when this theoretical analysis can lead to tighter inequalities than those of other existing frameworks. We believe that these results open the door to novel ideas and directions for DA.  
\keywords{domain adaptation, generalization bounds, optimal transport.}
\end{abstract}

\section{Introduction}
Many results in statistical learning theory study the problem of estimating the probability that a hypothesis chosen from a given hypothesis class can achieve a small true risk. This probability is often expressed in the form of generalization bounds on the true risk obtained using concentration inequalities with respect to (w.r.t.) some hypothesis class. Classic generalization bounds make the assumption that training and test data follow the same distribution. This assumption, however, can be violated in many real-world applications (e.g., in computer vision, language processing or speech recognition) where training and test data actually follow a related but different probability distribution. One may think of an example, where a spam filter is learned based on the abundant annotated data collected for one user and is further applied for newly registered user with different preferences. In this case, the performance of the spam filter will deteriorate as it does not take into account the mismatch between the underlying probability distributions. The need for algorithms tackling this problem has led to the emergence of a new field in machine learning called domain adaptation (DA), subfield of transfer learning \cite{Pan:2010}, where the source (training) and target (test) distributions are not assumed to be the same.
From a theoretical point of view, existing generalization guarantees for DA are expressed in the form of bounds over the target risk involving the source risk, a divergence between domains and a term $\lambda$ evaluating the capability of the considered hypothesis class to solve the problem, often expressed as a joint error of the ideal hypothesis between the two domains. In this context, minimizing the divergence between distributions is a key factor for the potential success of DA algorithms. Among the most striking results, existing generalization bounds based on the H-divergence \cite{Ben-david07analysisof} or the discrepancy distance \cite{DBLP:conf/colt/MansourMR09} have also an interesting property of being able to link the divergence between the probability distributions of two domains w.r.t. the considered class of hypothesis.

Despite their advantages, the above mentioned divergences do not directly take into account the geometry of the data distribution. Recently, \cite{courty14a,courty16a} has proposed to tackle this drawback by solving the DA problem using ideas from optimal transportation (OT) theory. Their paper proposes an algorithm that aims to reduce the divergence between two domains by minimizing the Wasserstein distance between their distributions. This idea has a very appealing and intuitive interpretation based on the transport of one domain to another. The transportation plan solving OT problem takes into account the geometry of the data by means of an associated cost function which is based on the Euclidean distance between examples. Furthermore, it is naturally defined as an infimum problem over all feasible solutions. An interesting property of this approach is that the resulting solution given by a joint probability distribution allows one to obtain the new projection of the instances of one domain into another directly without being restricted to a particular hypothesis class. This independence from the hypothesis class means that this solution not only ensures successful adaptation but also influences the capability term $\lambda$. While showing very promising experimental results, it turns out that this approach, however, has no theoretical guarantees. This paper aims to bridge this gap by presenting contributions covering three DA settings: (i) classic unsupervised DA where the learner has only access to labeled source data and unsupervised target instances, (ii) DA where one has access to labeled data from both source and target domains, (iii) multi-source DA where labeled instances for a set of distinct source domains (more than 2) are available. 
We provide new theoretical guarantees in the form of generalization bounds for these three settings based on the Wasserstein distance thus justifying its use in DA. According to \cite{opac-b1129524}, the Wasserstein distance is rather strong and can be combined with smoothness bounds to obtain convergences in other distances. This important advantage of Wasserstein distance leads to tighter bounds in comparison to other state-of-the-art results and is more computationally attractive.

The rest of this paper is organized as follows: Section 2 is devoted to the presentation of optimal transport and its application in DA. In Section 3, we present the generalization bounds for DA with the Wasserstein distance for both single- and multi-source learning scenarios. Finally, we conclude our paper in Section 4.

\section{Definitions and notations}
In this section, we first present the formalization of the Monge-Kantorovich \cite{kantorovich} optimization problem and show how optimal transportation problem found its application in DA. 

\subsection{Optimal transport} 
Optimal transportation theory was first introduced in \cite{monge_81} to study the problem of resource allocation. Assuming that we have a set of factories and a set of mines, the goal of optimal transportation is to move the ore from mines to factories in an optimal way, \ie , by minimizing the overall transport cost. More formally, let $\Omega  \subseteq \mathbb{R}^d$ be a measurable space and denote by $\mathcal{P}\left(\Omega\right)$ the set of all probability measures over $\Omega$. Given two probability measures $\mu_S, \mu_T \in  \mathcal{P}\left(\Omega \right)$, the Monge-Kantorovich problem consists in finding a probabilistic coupling $\gamma$ defined as a joint probability measure over $\Omega \times \Omega$ with marginals $\mu_S$ and $\mu_T$ for all $x,y \in \Omega$ that minimizes the cost of transport w.r.t. some function $c: \Omega \times \Omega \rightarrow \mathbb{R}_+$:
\begin{align*}
&\underset{\gamma}{\arg \min} \int_{\Omega_1 \times \Omega_2} c(\boldsymbol{x},\boldsymbol{y})^pd\gamma(\boldsymbol{x},\boldsymbol{y})\\
&\text{s.t.} \ \boldsymbol{P}^{\Omega_1}\# \gamma = \mu_S, \boldsymbol{P}^{\Omega_2}\# \gamma = \mu_T,
\end{align*}
where $\boldsymbol{P}^{\Omega_i}$ is the projection over $\Omega_i$ and $\#$ denotes the pushforward measure. 
This problem admits a unique solution $\gamma_0$ which allows us to define the Wasserstein distance of order $p$ between $\mu_S$ and $\mu_T$ for any $p \in [1; +\infty]$ as follows:
$$W_p^p(\mu_S,\mu_T) = \inf_{\gamma \in \Pi(\mu_S, \mu_T)} \int_{\Omega \times \Omega} c(\boldsymbol{x},\boldsymbol{y})^pd\gamma(\boldsymbol{x},\boldsymbol{y}),$$
where $c: \Omega \times \Omega \rightarrow \mathbb{R}^+$ is a cost function for transporting one unit of mass $\bm{x}$ to $\bm{y}$ and $\Pi(\mu_S, \mu_T)$ is a collection of all joint probability measures on $\Omega \times \Omega$ with marginals $\mu_S$ and $\mu_T$.
\begin{remark} 
In what follows, we consider only the case $p=1$ but all the obtained results can be easily extended to the case $p>1$ using H\"older inequality implying for every $p\leq q \Rightarrow W_p \leq W_q$.   
\end{remark}
In the discrete case, when one deals with empirical measures $\hat{\mu}_S = \frac{1}{N_S}\sum_{i=1}^{N_S}\delta_{x_S^i}$ and $\hat{\mu}_T = \frac{1}{N_T}\sum_{i=1}^{N_T}\delta_{x_T^i}$ represented by the uniformly weighted sums of $N_S$ and $N_T$ Diracs with mass at locations $x_S^i$ and $x_T^i$ respectively, Monge-Kantorovich problem is defined in terms of the inner product between the coupling matrix $\gamma$ and the cost matrix $C$:
$$W_1(\hat{\mu}_S, \hat{\mu}_T) = \min_{\gamma \in \Pi(\hat{\mu}_S, \hat{\mu}_T)}\langle C, \gamma\rangle_F$$
where $\langle \cdot \text{,} \cdot \rangle_F$ is the Frobenius dot product, $\Pi(\hat{\mu}_S, \hat{\mu}_T) = \lbrace \gamma \in \mathbb{R}^{N_S \times N_T}_+ \vert \gamma \bm{1} = \hat{\mu}_S, \gamma^T \bm{1} = \hat{\mu}_T\rbrace$ is a set of doubly stochastic matrices and $C$ is a dissimilarity matrix, \ie, $C_{ij} = c(x_S^i,x_T^j)$, defining the energy needed to move a probability mass from $x_S^i$ to $x_T^j$. 
Figure 1 shows how the solution of optimal transport between two point-clouds can look like.

\begin{figure}
\begin{center}
  \includegraphics[scale=0.35]{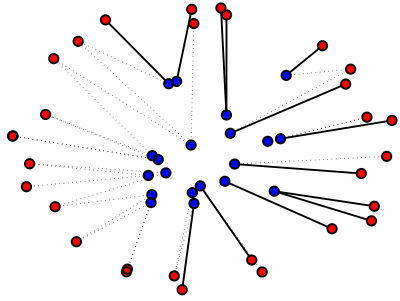} 
   \caption{Blue points are generated to lie inside a square with a side length equal to 1. Red points are generated inside an annulus containing the square. Solution of the regularized optimal transport problem is visualized by plotting dashed and solid lines that correspond to the large and small values given by the optimal coupling matrix $\gamma$.} 
  \label{fig:sfig2}
\end{center}
\end{figure}

It turns out that the Wasserstein distance has been successfully used in various applications, for instance: computer vision \cite{Rubner:2000:EMD:365875.365881}, texture analysis \cite{conf/scalespace/RabinPDB11}, tomographic reconstruction \cite{abraham_2015} and clustering \cite{DBLP:conf/icml/CuturiD14}. The huge success of algorithms based on this distance is due to \cite{conf/nips/Cuturi13} who introduced an entropy-regularized version of optimal transport that can be optimized efficiently using matrix scaling algorithm. We are now ready to present the application of OT to DA below. 

\subsection{Domain adaptation and optimal transport}
The problem of DA is formalized as follows: we define a domain as a pair consisting of a distribution $\mu_D$ on $\Omega$ and a labeling function $f_D: \Omega \rightarrow [0,1]$. A hypothesis class $H$ is a set of functions so that $\forall h \in H, h : \Omega \rightarrow \lbrace0,1\rbrace$. 
\begin{definition}
Given a convex loss-function $l$, the probability according to the distribution $\mu_D$ that a hypothesis $h \in H$ disagrees with a labeling function $f_D$ (which can also be a hypothesis) is defined as
$$\epsilon_D (h,f_D) = \mathbb{E}_{x \sim \mu_D} \left[l(h(x),f_D(x))\right].$$
\end{definition}
When the source and target error functions are defined w.r.t. $h$ and $f_S$ or $f_T$, we use the shorthand $\epsilon_S (h, f_S) = \epsilon_S (h)$ and $\epsilon_T (h, f_T) = \epsilon_T (h)$. We further denote by $\langle \mu_S, f_S \rangle$ the source domain and $\langle \mu_T, f_T \rangle$ the target domain. The ultimate goal of DA then is to learn a good hypothesis $h$ in $\langle \mu_S, f_S \rangle$ that has a good performance in $\langle \mu_T, f_T \rangle$.

In unsupervised DA problem, one usually has access to a set of source data instances $\bm{X_S} = \{\bm{x}^{i}_S \in \mathbb{R}^d \}_{i=1}^{N_S}$ associated with labels $\{ y^i_S\}_{i=1}^{N_S}$ and a set of unlabeled target data instances $\bm{X_T} = \{\bm{x}^{i}_T \in \mathbb{R}^d \}_{i=1}^{N_T}$. Contrary to the classic learning paradigm, unsupervised DA assumes that the marginal distributions of $\bm{X_S}$ and $\bm{X_T}$ are different and given by $\mu_S, \mu_T \in  \mathcal{P}\left(\Omega\right)$. 

For the first time, optimal transportation problem was applied to DA in \cite{courty14a,courty16a}. The main underlying idea of their work is to find a coupling matrix that efficiently transports source samples to target ones by solving the following optimization problem :
$$\gamma_o = \argmin_{\gamma \in \Pi(\hat{\mu}_S,\hat{\mu}_T)}\langle C, \gamma\rangle_F.$$
Once the optimal coupling $\gamma_o$ is found, source samples $\bm{X_S}$ can be transformed into target aligned source samples $\bm{\hat{X}_S}$ using the following equation 
$$\bm{\hat{X}_S} = \text{diag}((\gamma_o \bm{1})^{-1})\gamma_o \bm{X_T}.$$
The use of Wasserstein distance here has an important advantage over other distances used in DA (see Section 3.4) as it preserves the topology of the data and admits a rather efficient estimation as mentioned above. Furthermore, as shown in \cite{courty14a,courty16a}, it improves current state-of-the-art results on benchmark computer vision data sets and has a very appealing intuition behind.

\section{Generalization bounds with Wasserstein distance}
In this section, we introduce generalization bounds for the target error when the divergence between tasks' distributions is measured by the Wasserstein distance. 
\subsection{A bound relating the source and target error}
We first consider the case of unsupervised DA where no labelled data are available in the target domain. We start with a lemma that relates the Wasserstein metric with the source and target error functions for an arbitrary pair of hypothesis. Then, we show how the target error can be bounded by the Wasserstein distance for empirical measures. We first present the Lemma that introduces Wasserstein distance to relate the source and target error functions in a Reproducing Kernel Hilbert Space. 
\begin{lemma}
Let $\mu_S, \mu_T \in  \mathcal{P}\left(\Omega\right)$ be two probability measures on $\mathbb{R}^d$. Assume that the cost function $c(\bm{x},\bm{y}) = \Vert \phi(\bm{x}) - \phi(\bm{y}) \Vert_{\mathcal{H}_{k_l}}$, where $\mathcal{H}_{k_l}$ is a Reproducing Kernel Hilbert Space (RKHS) equipped with kernel $k_l: \Omega \times \Omega \rightarrow \mathbb{R}$ induced by $\phi: \Omega \rightarrow \mathcal{H}_{k_l}$ and $k_l(\bm{x}, \bm{y}) = \langle \phi(\bm{x}), \phi(\bm{y}) \rangle_{\mathcal{H}_{k_l}}$. Assume further that the loss function $l_{h,f}:x \longrightarrow l(h(x),f(x))$ is convex, symmetric, bounded, obeys the triangular equality and has the parametric form $\vert h(x) - f(x) \vert^q$ for some $q > 0$. Assume also that kernel $k_l$ in the RKHS $\mathcal{H}_{k_l}$ is square-root integrable w.r.t. both $\mu_S,\mu_T$ for all  $\mu_S,\mu_T \in \mathcal{P}(\Omega)$ where $\Omega$ is separable and $0\leq k_l(\bm{x},\bm{y}) \leq K, \forall \ \bm{x},\bm{y} \in \Omega$. Then the following holds
$$ \epsilon_T (h , h')\leq \epsilon_S (h , h') + W_1(\mu_S,\mu_T)$$
for every hypothesis $h', h$.
\label{trm:mmd_w}
\end{lemma}
\begin{proof}
As this Lemma plays a key role in the following sections, we give its proof here. We assume that $l_{h,f}:x \longrightarrow l(h(x),f(x))$ in the definition of $\epsilon(h)$ is a convex loss-function defined  $\forall h,f \in \mathcal{F}$ where $\mathcal{F}$ is a unit ball in the RKHS $\mathcal{H}_k$. Considering that $h,f \in \mathcal{F}$, the loss function $l$ is a non-linear mapping of the RKHS $\mathcal{H}_{k}$ for the family of losses $l(h(x),f(x)) = \vert h(x) - f(x) \vert^q$\footnote{If $h,f \in \mathcal{H}$ then $h-f \in \mathcal{H}$ implying that $l(h(x),f(x)) = \vert h(x) - f(x) \vert^q$ is a nonlinear transform for $h-f \in \mathcal{H}$.}. Using results from \cite{Saitoh}, one may show that $l_{h,f}$ also belongs to the RKHS $\mathcal{H}_{k_l}$ admitting the reproducing kernel $k_l$ and that its norm obeys the following inequality:
$$\vert \vert l_{h,f} \vert \vert_{\mathcal{H}_{k_l}}^2 \leq \vert \vert h - f \vert \vert_{\mathcal{H}_k}^{2q}.$$
This result gives us two important properties of $l_{f,h}$ that we use further:
\begin{itemize}
\item $l_{h,f}$ belongs to the RKHS that allows us to use the reproducing property;
\item the norm $\vert \vert l_{h,f} \vert \vert_{\mathcal{H}_{k_l}}$ is bounded. 
\end{itemize}
For simplicity, we can assume that $\vert \vert l_{h,f} \vert \vert_{\mathcal{H}_{k_l}}$ is bounded by 1. This assumption can be verified by imposing the appropriate bounds on the norms of $h$ and $f$ and is easily extendable to the case when $\vert \vert l_{h,f} \vert \vert_{\mathcal{H}_{k_l}} \leq M$ by scaling as explained in \cite[Proposition 2]{DBLP:conf/colt/MansourMR09}. We also note that $q$ does not necessarily have to appear in the final result as we seek to bound the norm of $l$ and not to give an explicit expression for it in terms of $\Vert h \Vert_{\mathcal{H}_{k}}, \Vert f \Vert_{\mathcal{H}_{k}}$ and $q$.
Now the error function defined above can be also expressed in terms of the inner product in the corresponding Hilbert space, i.e\footnote{For the sake of simplicity, we will further write $\mathcal{H}$ meaning $\mathcal{H}_{k_l}$ and $l$ meaning $l_{f,h}$.}:
$$\epsilon_S (h, f_S) = \mathbb{E}_{x \sim \mu_S} [l(h(x),f_S(x))] = \mathbb{E}_{x \sim \mu_S} [\langle\phi(x),l\rangle_\mathcal{H}].$$
We define the target error in the same manner: 
$$\epsilon_T (h, f_T) = \mathbb{E}_{y \sim \mu_T} [l(h(y),f_T(y))] = \mathbb{E}_{y \sim \mu_T} [\langle\phi(y),l\rangle_\mathcal{H}].$$
With the definitions introduced above, the following holds:
\begin{align*}
\epsilon_T (h , h') &   = \epsilon_T (h , h') + \epsilon_S (h , h') - \epsilon_S (h , h') \\
& = \epsilon_S (h , h') + \mathbb{E}_{y \sim \mu_T} [\langle\phi(y),l\rangle_\mathcal{H}] - \mathbb{E}_{x \sim \mu_S} [\langle \phi(x),l\rangle_\mathcal{H}] \\
& = \epsilon_S (h , h') + \langle \mathbb{E}_{y \sim \mu_T} [\phi(y)] - \mathbb{E}_{x \sim \mu_S} [ \phi(x)] ,l\rangle_\mathcal{H}\\
& \leq \epsilon_S (h , h') + \Vert l \Vert_\mathcal{H} \Vert\mathbb{E}_{y \sim \mu_T} [\phi(y)] - \mathbb{E}_{x \sim \mu_S} [ \phi(x)]\Vert_\mathcal{H}\\
& \leq \epsilon_S (h , h') + \Vert \int_{\Omega} \phi d(\mu_S - \mu_T) \Vert_\mathcal{H}.
\end{align*}
Second line is obtained by using the reproducing property applied to $l$, third line follows from the properties of the expected value. Fourth line here is due to the properties of the inner-product while fifth line is due to $\vert \vert l_{h,f} \vert \vert_{\mathcal{H}} \leq 1$.
Now using the definition of the joint distribution we have the following:
\begin{align*}
\Vert \int_{\Omega} \phi d(\mu_S - \mu_T) \Vert_\mathcal{H} & = \Vert \int_{\Omega \times \Omega} (\phi(\bm{x}) - \phi(\bm{y})) d\gamma(\bm{x},\bm{y}) \Vert_\mathcal{H}\\
& \leq \int_{\Omega \times \Omega } \Vert \phi(\bm{x}) - \phi(\bm{y}) \Vert_{\mathcal{H}} d\gamma(\bm{x},\bm{y}).
\end{align*}
As the last inequality holds for any $\gamma$, we obtain the final result by taking the infimum over $\gamma$ from the right-hand side, \ie : 
\begin{align*}
\int_{\Omega} \phi d(\mu_S - \mu_T) \Vert_\mathcal{H} \leq \inf_{\gamma \in \Pi(\mu_S, \mu_T)} \int_{\Omega \times \Omega} \Vert \phi(\bm{x}) - \phi(\bm{y}) \Vert_{\mathcal{H}} d\gamma(\bm{x},\bm{y}).
\end{align*}
which gives
$$\epsilon_T (h , h') \leq \epsilon_S (h , h') + W_1(\mu_S,\mu_T).$$
$\Box$
\end{proof}
\begin{remark}
We note that the functional form of the loss-function $l(h(x),f(x)) = \vert h(x) - f(x) \vert^q$ is just an example that was used as the basis for the proof. According to \cite[Appendix 2]{Saitoh}, we may also consider more general nonlinear transformations of $h$ and $f$ that satisfy the assumption imposed on $l_{h,f}$ above. These transformations may include a product of hypothesis and labeling functions and thus the proposed results is valid for hinge-loss too.
\end{remark}
This lemma makes use of the Wasserstein distance to relate the source and target errors. The assumption made here is to specify that the cost function $c(\bm{x},\bm{y}) = \Vert \phi(\bm{x}) - \phi(\bm{y}) \Vert_{\mathcal{H}}$. While it may seem too restrictive, this assumption is, in fact, not that strong. Using the properties of the inner-product, one has:
\begin{align*}
\Vert \phi(\bm{x}) - \phi(\bm{y}) \Vert_{\mathcal{H}} &= \sqrt{\langle \phi(\bm{x}) - \phi(\bm{y}), \phi(\bm{x}) - \phi(\bm{y}) \rangle_{\mathcal{H}}} \\
&= \sqrt{k(\bm{x},\bm{x}) -2k(\bm{x},\bm{y})+k(\bm{x},\bm{y})}.
\end{align*}

Now it can be shown that for any given positive-definite kernel $k$ there is a distance $c$ (used as a cost function in our case) that generates it and vice versa (see Lemma 12 from \cite{journals/corr/abs-1207-6076}).
 
In order to prove our next theorem, we present first an important result showing the convergence of the empirical measure $\hat{\mu}$ to its true associated measure w.r.t. the Wasserstein metric. This concentration guarantee allows us to propose generalization bounds based on the Wasserstein distance for finite samples rather than true population measures. Following \cite{Bolley:2007:QCI}, it can be specialized for the case of $W_1$ as follows\footnote{We present the original version of this Theorem in the Supplementary material.}
\begin{theorem}[\cite{Bolley:2007:QCI}, Theorem 1.1]\label{trm:bolley}
Let $\mu$ be a probability measure in $\mathbb{R}^d$ so that for some $\alpha>0$, we have that $\int_{\mathbb{R}^d} e^{\alpha\Vert x\Vert^2}d\mu<\infty$ and $\hat{\mu} = \frac{1}{N}\sum_{i=1}^{N} \delta_{x_i}$ be its associated empirical measure defined on a sample of independent variables $\{ x_i \}_{i=1}^N$ drawn from $\mu$. Then for any $d'>d$ and $\varsigma' < \sqrt{2}$ there exists some constant $N_0$ depending on $d'$ and some square exponential moment of $\mu$ such that for any $\varepsilon > 0$ and $N \geq N_0 \max(\varepsilon^{-(d'+2)},1)$
$$\mathbb{P} \left[ W_1(\mu, \hat{\mu}) > \varepsilon \right] \leq \exp\left(-\frac{\varsigma'}{2} N \varepsilon^2\right),$$
\label{trm_concen}
where $d', \varsigma'$ can be calculated explicitly.
\end{theorem}
The convergence guarantee of this theorem can be further strengthened as shown in \cite{fournier:hal-00915365} but we prefer this version for the ease of reading.  
We can now use it in combination with the previous Lemma to prove the following theorem.
\begin{theorem}
Under the assumptions of Lemma \ref{trm:mmd_w}, let $\mathbf{X_S}$ and $\mathbf{X_T}$ be two samples of size $N_S$ and $N_T$ drawn i.i.d. from $\mu_S$ and $\mu_T$ respectively. Let $\hat{\mu}_S = \frac{1}{N_S}\sum_{i=1}^{N_S} \delta_{x_S^i}$ and $\hat{\mu}_T = \frac{1}{N_T}\sum_{i=1}^{N_T} \delta_{x_T^i}$ be the associated empirical measures. Then for any $d'>d$ and $\varsigma' < \sqrt{2}$ there exists some constant $N_0$ depending on $d'$ such that for any $\delta > 0$ and $\min(N_S,N_T) \geq N_0 \max(\delta^{-(d'+2)},1)$ with probability at least $1-\delta$ for all $h$ the following holds:
\begin{align*}
\epsilon_T (h)\leq \epsilon_S (h) &+ W_1(\hat{\mu}_S, \hat{\mu}_T) + \sqrt{2\log\left(\frac{1}{\delta}\right)/\varsigma'}\left(\sqrt{\frac{1}{N_S}}+\sqrt{\frac{1}{N_T}}\right) + \lambda ,
\end{align*}
where $\lambda$ is the combined error of the ideal hypothesis $h^*$ that minimizes the combined error of $\epsilon_S(h)+\epsilon_T(h)$.
\label{trm:our2}
\end{theorem}
\begin{proof}
\begin{align*}
\epsilon_T (h) & \leq \epsilon_T (h^*) + \epsilon_T (h^*,h) = \epsilon_T (h^*) + \epsilon_S (h,h^*) +  \epsilon_T (h^*,h) - \epsilon_S (h,h^*)\\
& \leq \epsilon_T (h^*) + \epsilon_S (h,h^*) + W_1(\mu_S, \mu_T) \\
& \leq \epsilon_T (h^*) + \epsilon_S (h) + \epsilon_S (h^*) + W_1(\mu_S, \mu_T) \\
& =  \epsilon_S (h) + W_1(\mu_S, \mu_T) + \lambda \\
& \leq \epsilon_S (h) + W_1(\mu_S, \hat{\mu}_S) + W_1(\hat{\mu}_S, \mu_T) + \lambda \\
& \leq \epsilon_S (h) + \sqrt{2\log\left(\frac{1}{\delta}\right)/N_S\varsigma'} + W_1(\hat{\mu}_S, \hat{\mu}_T) + W_1(\hat{\mu}_T,\mu_T) + \lambda \\
& \leq \epsilon_S (h) + W_1(\hat{\mu}_S, \hat{\mu}_T) + \lambda+ \sqrt{2\log\left(\frac{1}{\delta}\right)/\varsigma'}\left(\sqrt{\frac{1}{N_S}}+\sqrt{\frac{1}{N_T}}\right).
\end{align*}
Second and fourth lines are obtained using the triangular inequality applied to the error function. Third inequality is a consequence of Lemma 1. Fifth line follows from the definition of $\lambda$, sixth, seventh and eighth lines use the fact that Wasserstein metric is a proper distance and Theorem 1. $\Box$
\end{proof}

A first immediate consequence of this theorem is that it justifies the use of the optimal transportation in DA context. However, we would like to clarify the fact that the bound does not suggest that minimization of the Wasserstein distance can be done independently from the minimization of the source error nor it says that the joint error given by the lambda term becomes small. First, it is clear that the result of $W_1$ minimization provides a transport of the source to the target such as $W_1$ becomes small when computing the distance between newly transported sources and target instances. Under the hypothesis that class labeling is preserved by transport, \ie $P_{\text{source}}(y|x_s)=P_{\text{target}}(y|\text{Transport}(x_s))$, the adaptation can be possible by minimizing $W_1$ only. However, this is not a reasonable assumption in practice. Indeed, by minimizing the $W_1$ distance only, it is possible that the obtained transformation transports one positive and one negative source instance to the same target point and then the empirical source error cannot be properly minimized. Additionally, the joint error will be affected since no classifier will be able to separate these source points. We can also think of an extreme case where the positive source examples are transported to negative target instances, in that case the joint error $\lambda$ will be dramatically affected.
A solution is then to regularize the transport to help the minimization of the source error which can be seen as a kind of joint optimization. This idea was partially implemented as a class-labeled regularization term added to the original optimal transport formulation in \cite{courty14a,courty16a} and showed good empirical results in practice. The proposed regularized optimization problem reads
$$\min_{\gamma \in \Pi(\hat{\mu}_S,\hat{\mu}_T)}\langle C, \gamma\rangle_F - \frac{1}{\lambda}E(\gamma) + \eta \sum_j \sum_\mathcal{L} \Vert \gamma(I_\mathcal{L}, j)\Vert_q^p.$$ 
Here, the second term $E(\gamma) = -\sum_{i,j}^{N_S,N_T} \gamma_{i,j}\log(\gamma_{i,j})$ is the regularization term that allows one to solve optimal transportation problem efficiently using Sinkhorn-Knopp matrix scaling algorithm \cite{sinknopp_67}. Second regularization term $\eta \sum_j \sum_c \Vert \gamma(I_c, j)\Vert_q^p$ is used to restrict source examples of different classes to be transported to the same target examples by promoting group sparsity in the matrix $\gamma$ thanks to $\Vert \cdot \Vert^p_q$ with $q = 1$ and $p = \frac{1}{2}$. In some way, this regularization term influences the capability term by ensuring the existence of a good hypothesis that will be able to be discriminant on both source and target domains data. Another recent paper of \cite{PerrotCFH16} also suggests that transport regularization is important for the use of OT in domain adaptation tasks. Thus, we conclude that the regularized transport formulations such as the one of \cite{courty14a,courty16a} can be seen as algorithmic solutions for controlling the trade-off between the terms of the bound. 

Assuming that $\epsilon_S (h)$ is properly minimized, only $\lambda$ and the Wasserstein distance between empirical measures defined on the source and target samples have an impact on the potential success of adaptation. Furthermore, the fact that the Wasserstein distance is defined in terms of the optimal coupling used to solve the DA problem and is not restricted to any particular hypothesis class directly influences $\lambda$ as discussed above. We now proceed to give similar bounds for the case where one has access to some labeled instances in the target domain. 

\subsection{A learning bound for the combined error}
In semi-supervised DA, when we have access to an additional small set of labeled instances in the target domain, the goal is often to find a trade-off between minimizing the source and the target errors depending on the number of instances available in each domain and their mutual correlation. Let us now assume that we possess $\beta n$ instances drawn independently from $\mu_T$ and $(1-\beta)n$ instances drawn independently from $\mu_S$ and labeled by $f_S$ and $f_T$, respectively. In this case, the empirical combined error \cite{bendavidth} is defined as a convex combination of errors on the source and target training data:
$$\hat{\epsilon}_{\alpha}(h) = \alpha \hat{\epsilon}_T(h) + (1-\alpha)\hat{\epsilon}_S(h),$$
where $\alpha \in [0,1]$.

The use of the combined error is motivated by the fact that if the number of instances in the target sample is small compared to the number of instances in the source domain (which is usually the case in DA), minimizing only the target error may not be appropriate. Instead, one may want to find a suitable value of $\alpha$ that ensures the minimum of $\hat{\epsilon}_{\alpha}(h)$ w.r.t. a given hypothesis $h$. We now prove a theorem for the combined error similar to the one presented in \cite{bendavidth}. 
\begin{theorem}
Under the assumptions of Theorem \ref{trm:our2} and Lemma \ref{trm:mmd_w}, let $D$ be a labeled sample of size $n$ with  $\beta n$ points drawn from $\mu_T$ and $(1-\beta) n$ from $\mu_S$ with $\beta \in (0,1)$,  and labeled according to $f_S$ and $f_T$.
If $\hat{h}$ is the empirical minimizer of $\hat{\epsilon}_\alpha(h)$ and $h^*_T = \underset {h}{\min} \ \epsilon_T(h)$ then for any $\delta \in (0,1)$ with probability at least $1-\delta$ (over the choice of samples),
$$\epsilon_T(\hat{h}) \leq \epsilon_T(h_T^*) + c_1 + 2(1-\alpha)(W_1(\hat{\mu}_S, \hat{\mu}_T) + \lambda + c_2),$$
where
\begin{align*}
\phantom{{}=1}c_1 = &2 \sqrt{\frac{2K\left(\frac{(1-\alpha)^2}{1-\beta}+\frac{\alpha^2}{\beta}\right)\log(2/\delta)}{n}} +4 \sqrt{K/n} \left( \frac{\alpha}{n\beta \sqrt{\beta} } + \frac{(1-\alpha)}{n(1-\beta)\sqrt{1-\beta} }\right),
\end{align*}
\begin{align*}
&c_2 = \sqrt{2\log\left(\frac{1}{\delta}\right)/\varsigma'}\left(\sqrt{\frac{1}{N_S}}+\sqrt{\frac{1}{N_T}}\right).
\end{align*}
\label{trm:our4}
\end{theorem}
\begin{proof}
\begin{align*}
\epsilon_T(\hat{h}) & \leq \epsilon_{\alpha}(\hat{h}) + (1-\alpha) (W_1(\mu_S,\mu_T)+\lambda)\\
& \leq \hat{\epsilon}_{\alpha}(\hat{h}) + \sqrt{\frac{2K\left(\frac{(1-\alpha)^2}{1-\beta}+\frac{\alpha^2}{\beta}\right)\log(2/\delta)}{n}}+ (1-\alpha) (W_1(\mu_S,\mu_T)+\lambda)
 \\
&\phantom{11}+2 \sqrt{K/n} \left( \frac{\alpha}{n\beta \sqrt{\beta} }+ \frac{(1-\alpha)}{n(1-\beta)\sqrt{1-\beta} }\right)\\
& \leq \hat{\epsilon}_{\alpha}(h_T^*) + \sqrt{\frac{2K\left(\frac{(1-\alpha)^2}{1-\beta}+\frac{\alpha^2}{\beta}\right)\log(2/\delta)}{n}}+ (1-\alpha)(W_1(\mu_S,\mu_T)+\lambda) \\
&\phantom{11}+ 2 \sqrt{K/n} \left( \frac{\alpha}{n\beta \sqrt{\beta} } + \frac{(1-\alpha)}{n(1-\beta)\sqrt{1-\beta} }\right)\\
& \leq \epsilon_{\alpha}(h_T^*) + 2\sqrt{\frac{2K\left(\frac{(1-\alpha)^2}{1-\beta}+\frac{\alpha^2}{\beta}\right)\log(2/\delta)}{n}}+ (1-\alpha)(W_1(\mu_S,\mu_T)+\lambda)\\
&\phantom{11}+4 \sqrt{K/n} \left( \frac{\alpha}{n\beta \sqrt{\beta} } + \frac{(1-\alpha)}{n(1-\beta)\sqrt{1-\beta} }\right)\\
& \leq \epsilon_{T}(h_T^*) + 2\sqrt{\frac{2K\left(\frac{(1-\alpha)^2}{1-\beta}+\frac{\alpha^2}{\beta}\right)\log(2/\delta)}{n}} +2(1-\alpha) (W_1(\mu_S,\mu_T)+\lambda) \\
&\phantom{11}+ 4 \sqrt{K/n} \left( \frac{\alpha}{n\beta \sqrt{\beta} } + \frac{(1-\alpha)}{n(1-\beta)\sqrt{1-\beta} }\right)\\
&\leq \epsilon_T(h_T^*) + c_1 + 2(1-\alpha)(W_1(\hat{\mu}_S, \hat{\mu}_T) + \lambda + c_2).
\end{align*}
The proof follows the standard theory of uniform convergence for empirical risk minimizers where lines 1 and 5 are obtained by observing that $ \vert \epsilon_{\alpha}(h) - \epsilon_T(h)\vert = \vert \alpha\epsilon_{T}(h) + (1-\alpha)\epsilon_S(h)-\epsilon_T(h) \vert = \vert (1-\alpha)(\epsilon_{S}(h)- \epsilon_T(h)) \vert \leq (1-\alpha)(W_1(\mu_T,\mu_{S})+\lambda)
$ where the last inequality comes from line 4 of the proof of Theorem~\ref{trm:our2}, line 3 follows from the definition of $\hat{h}$ and $h_T^*$ and line 6 is a consequence of 
Theorem \ref{trm:bolley}.
Finally, lines 2 and 4 are obtained based on the concentration inequality obtained for $\epsilon_{\alpha}(h)$. Due to the lack of space, we put this result in the Supplementary material. $\Box$
\end{proof}
This theorem shows that the best hypothesis that takes into account both source and target labeled data (\ie , $0 \leq \alpha < 1 $) performs at least as good as the best hypothesis learned on target data instances alone ($\alpha = 1$). This result agrees well with the intuition that semi-supervised DA approaches should be at least as good as unsupervised ones.

\section{Multi-source domain adaptation}
We now consider the case where not one but many source domains are available during the adaptation. More formally, we define $N$ different source domains (where $T$ can either be or not a part of this set).  
For each source $j$, we have a labelled sample $S_j$ of size $n_j = \beta_j n$ $\left( \sum_{j=1}^N \beta_j = 1, \sum_{j=1}^N n_j = n\right)$ drawn from the associated unknown distribution $\mu_{S_j}$ and labelled by $f_j$. We now consider the empirical weighted multi-source error of a hypothesis $h$ defined for some vector $\bm{\alpha} = \{\alpha_1, \dots, \alpha_N \}$ as follows:
$$\hat{\epsilon}_{\bm{\alpha}}(h) = \sum_{j=1}^N\alpha_j \hat{\epsilon}_{S_{j}}(h),$$
where $\sum_{j=1}^N\alpha_j = 1$ and each $\alpha_j$ represents the weight of the source domain $S_j$.

In what follows, we show that generalization bounds obtained for the weighted error give some interesting insights into the application of the Wasserstein distance to multi-source DA problems.
\begin{theorem}
With the assumptions from Theorem \ref{trm:our2} and Lemma \ref{trm:mmd_w}, let $S$ be a sample of size $n$, where for each $j \in \{1,\dots,N\}$, $\beta_j n$ points are drawn from $\mu_{S_j}$ and labelled according to $f_{j}$. If $\hat{h}_{\bm{\alpha}}$ is the empirical minimizer of $\hat{\epsilon}_{\bm{\alpha}}(h)$ and $h^*_T = \underset {h}{\min} \ \epsilon_T(h)$ then for any fixed $\bm{\alpha}$ and $\delta \in (0,1)$ with probability at least $1-\delta$ (over the choice of samples),
$$\epsilon_T(\hat{h}_{\bm{\alpha}}) \leq \epsilon_T(h_T^*) + c_1 + 2\sum_{j=1}^N \alpha_j \left( W_1(\hat{\mu}_j, \hat{\mu}_T)+\lambda_j+c_2\right),$$
where
$$c_1 = 2 \sqrt{\frac{2K\sum_{j=1}^N \frac{\alpha_j^2}{\beta_j} \log(2/\delta)}{n}}+2\sqrt{\sum_{j=1}^N\frac{K\alpha_j}{\beta_jn}},$$ 
$$c_2 = \sqrt{2\log\left(\frac{1}{\delta}\right)/\varsigma'}\left(\sqrt{\frac{1}{N_{S_j}}}+\sqrt{\frac{1}{N_T}}\right),$$
where $\lambda_j = \underset{h}{\min} \ (\epsilon_{S_j}(h)+\epsilon_T(h))$ represents the joint error for each source domain $j$.
\label{trm:our5}
\end{theorem}
\begin{proof}
The proof of this Theorem is very similar to the proof of Theorem \ref{trm:our5}. The final result is obtained by applying the concentration inequality for $\epsilon_{\bm{\alpha}}(h)$ (instead of those used for $\epsilon_{\alpha}(\hat{h})$ in the proof of Theorem \ref{trm:our5}) and by using the following inequality that can be obtained easily by following the principle of the proof of \cite[Theorem 4]{bendavidth}:
$$\vert \epsilon_{\bm{\alpha}}(h) - \epsilon_T(h) \vert \leq \sum_{j=1}^N\alpha_j \left( W_1(\mu_j, \mu_T) + \lambda_j \right),$$
where $\lambda_j = \underset{h}{\min} \ (\epsilon_{S_j}(h)+\epsilon_T(h))$.
For the sake of completness, we present the concentration inequality for $\epsilon_{\bm{\alpha}}(h)$ in the Supplementary material.$\Box$
\end{proof}
While the results for multi-source DA may look like a trivial extension of the theoretical guarantees for the case of two domains, they can provide a very fruitful implication on their own. As in the previous case, we consider that the potential term that should be minimized in this bound by a given multi-source DA algorithm is the term $\sum_{j=1}^N \alpha_j W_1(\hat{\mu}_j, \hat{\mu}_T)$.

Assume that $\hat{\mu}$ is an arbitrary unknown empirical probability measure on $\mathbb{R}^d$. Using the triangle inequality and bearing in mind that $\alpha_j\leq 1$ for all $j$, we can bound this term as follows:
$$\sum_{j=1}^N \alpha_j W_1(\hat{\mu}_j, \hat{\mu}_T) \leq (\sum_{j=1}^N \alpha_j W_1(\hat{\mu}_j, \hat{\mu})) + NW_1(\hat{\mu},\hat{\mu}_T).$$
Now, let us consider the following optimization problem 
\begin{align}
  \inf_{\hat{\mu} \in \mathcal{P}(\Omega)} \frac{1}{N} \sum_{j=1}^N \alpha_j W_1(\hat{\mu}_j, \hat{\mu})+ W_1(\hat{\mu},\hat{\mu}_T). 
  \label{eq1}
\end{align}
In this formulation, the first term  $\frac{1}{N} \sum_{j=1}^N \alpha_j W_1(\hat{\mu}_j, \hat{\mu})$ corresponds exactly to the problem known in the literature as the Wasserstein barycenters problem \cite{journals/siamma/AguehC11} that can be defined for $W_1$ as follows. 
\begin{definition} 
For $N$ probability measures $\mu_1, \mu_2, \dots, \mu_N \in \mathcal{P}(\Omega)$, an empirical Wasserstein barycenter is a minimizer $\mu^*_N \in \mathcal{P}(\Omega)$ of $J_N(\mu) = \min_{\mu} \frac{1}{N}\sum_{i=1}^N a_iW_1(\mu, \mu_i)$, where for all $i$, $a_{i}>0$ and $\sum_{i=1}^N a_i = 1$. 
\end{definition}
The second term $W_1(\hat{\mu},\hat{\mu}_T)$ of Equation \ref{eq1} finds the probability coupling that transports the barycenter to the target distribution. Altogether, this bound suggests that in order to adapt in the multi-source learning scenario, one can proceed by finding a barycenter of the source probability distributions and transport it to the target probability distribution. 

On the other hand, the optimization problem related to the Wasserstein barycenters is closely related to the Multimarginal optimal transportation problem \cite{pass_2011} where the goal is to find a probabilistic coupling that aligns $N$ distinct probability measures. Indeed, as shown in \cite{journals/siamma/AguehC11}, for a quadratic Euclidean cost function the solution $\mu^*_N$ of the barycenter problem in the Wasserstein space is given by the following equation:
$$\mu^*_N = \sum_{k \in \{k_1, \dots, k_N \}} \gamma_k \delta_{A_{k}(x)},$$
where $A_{k}(x) = \sum_{j=1}^N \gamma_j x_{k_j}$ and $\boldsymbol{\gamma} \in \mathbb{R}^{\prod_{j=1}^N n_j}$ is an optimal coupling solving for all $k \in \{1, \dots, N \}$ the multimarginal optimal transportation problem with the following cost:
$$c_k = \sum \frac{a_j}{2} \Vert x_{k_j} - A_{k}(x) \Vert^2.$$
We note that this reformulation is particularly useful when the source distributions are assumed to be Gaussians. In this case, there exists a closed form solution for the multimarginal optimal transportation problem \cite{knott_smith_94} and thus for Wasserstein barycenters problem too. Finally, it is also worth noticing that the optimization problem Equation \ref{eq1} has already been introduced to solve the multiview learning task\cite{abraham_2015}. In their formulation, the second term is referred to as an a priori knowledge about the barycenter which, in our case, is explicitly given by the target probability measure simultaneously.

\section{Comparison to other existing bounds}
As mentioned in the introduction, there are numerous papers that proposed DA generalization bounds. The main difference between them lies in the distance used to measure the divergence between source and target probability distributions. The seminal work of \cite{Ben-david07analysisof} considered a modification of the total variation distance called H-divergence given by the following equation:
$$d_H(p,q) = 2\sup_{h \in H} \vert p(h(x)=1) - q(h(x)=1)\vert.$$ On the other hand, \cite{DBLP:conf/colt/MansourMR09} and \cite{DBLP:journals/tcs/CortesM14} proposed to replace it with the discrepancy distance: 
$$\text{disc}(p,q) = \max_{h,h' \in H} \vert \epsilon_p(h,h') - \epsilon_q(h,h')\vert.$$
The latter one was shown to be tighter in some plausible scenarios. A more recent work on generalization bounds using integral probability metric 
$$\text{D}_{\mathcal{F}} (p,q) = \sup_{f \in \mathcal{F}} \vert \int fdp - \int fdq \vert$$
and R\'enyi divergence $$D_\alpha(p\Vert q) = \frac{1}{\alpha - 1} \log \left(\sum_{i=1}^n \frac{p_i^\alpha}{q_i^{\alpha - 1}}\right)$$ were presented in \cite{DBLP:conf/nips/ZhangZY12} and \cite{DBLP:conf/uai/MansourMR09}, respectively. \cite{DBLP:conf/nips/ZhangZY12} provides a comparative analysis of discrepancy and integral metric based bounds and shows that the former are less tight. \cite{DBLP:conf/uai/MansourMR09} derives the domain adaptation bounds in multisource scenario by assuming that the good hypothesis can be learned as a weighted convex combination of hypothesis from all the sources available. Considering a reasonable amount of previous work on the subject, a natural question about the tightness of the DA bounds based on the Wasserstein metric introduced above arises in spite of the Theorem 3. 

The answer to this question is partially given by the Csisz\`ar-Kullback-Pinsker inequlity \cite{opac-b1082909} defined for any two probability measures $p, q \in \mathcal{P}(\Omega)$ as follows:
$$W_1(p,q) \leq \text{diam}(\Omega)\Vert p-q \Vert_{\text{TV}} \leq \sqrt{2\text{diam}(\Omega)\text{KL}(p\Vert q)},$$
where $\text{diam}(\Omega) = \sup_{x,y \in \Omega} \{ d(x,y)\}$ and $\text{KL}(p\Vert q)$ is the Kullback-Leibler divergence.

A first consequence of this inequality shows that the Wasserstein distance not only appears naturally and offers algorithmic advantages in DA but also gives tighter bounds than total variation distance (L1) used in \cite[Theorem 1]{bendavidth}. On the other hand, it is also tighter than bounds presented in \cite{DBLP:conf/uai/MansourMR09} as the Wasserstein metric can be bounded by the Kullback-Leibler divergence which is a special case of R\'enyi divergence when $\alpha \rightarrow 1$ as shown in \cite{RePEc:eee:stapro:v:94:y:2014:i:c:p:77-85}. Regarding the discrepancy distance and omitting the hypothesis class restriction, one has $d_{min}\text{disc}(p,q) \leq W_1(p,q)$, where $d_{min} = \min_{x \neq y \in \Omega} \{ d(x,y)\}$. This inequality, however, is not very informative as minimum distance between two distinct points can be dramatically small thus making it impossible to compare the considered distances directly.

Regarding computational guarantees, we note that the H-divergence used in \cite{Ben-david07analysisof} is defined as the error of the best hypothesis distinguishing between the source and target domain samples pseudo-labeled with 0's and 1's and thus presents an intractable problem in practice. For the discrepancy distance, authors provided a linear time algorithm for its calculation in 1D case and showed that in other cases it scales as $O(N_S^2d^{2.5} + N_Td^2)$ when the squared loss is used \cite{DBLP:conf/colt/MansourMR09}. In its turn, the Wasserstein distance with entropic regularization can be calculated based on the linear time Sinkhorn-Knopp algorithm regardless the choice of the cost function $c$ that presents a clear advantage over the other distances considered above.

Finally, none of the distances previously introduced in the generalization bounds for DA take into account the geometry of the space meaning that the Wasserstein distance is a powerful and precise tool to measure the divergence between domains.

\section{Conclusion}
In this paper, we studied the problem of DA in the optimal transportation context. Motivated by the existing algorithmic advances in domain adaptation, we presented the generalization bounds for both single and multi-source learning scenarios where the distance between source and target probability distributions is measured by the Wasserstein metric. Apart from the distance term that taken alone justifies the use of optimal transport in domain adaptation, the obtained bounds also included the capability term depicting the existence of a good hypothesis for both source and target domains. A direct consequence of its appearance in the bounds is the need to regularize optimal transportation plan in a way that allows to ensure efficient learning in the source domain once the interpolation was done. This regularization, achieved in \cite{courty14a,courty16a} by the means of the class-based regularization, thus can be also viewed as an implication of the obtained results. Furthermore, it explains the superior performance of both class-based and Laplacian regularized optimal transport in domain adaptation compared to it simple entropy regularized form. On the other hand, we also showed that the use of the Wasserstein distance leads to tighter bounds compared to the bounds based on the total variation distance and R\'enyi divergence and is more computationally attractive than some other existing results. From the analysis of the bounds obtained for the multi-source DA, we derived a new algorithmic idea that suggests the minimization of two terms: first term corresponds to the Wasserstein barycenter problem calculated on the empirical source measures while the second one solves the optimal transport problem between this barycenter and the empirical target measure. 

Future perspectives of this work are many and concern both the derivation of new algorithms for domain adaptation and the demonstration of new theoretical results. First of all, we would like to study the extent to which the cost function used in the derivation of the bounds can be used on actual real-world DA problems. This distance, defined as a norm of difference between two feature maps, can offer a flexibility in the calculation of the optimal transport metric due to its kernel representation. Secondly, we aim to produce new concentration inequalities for the $\lambda$ term that will allow to bound the true best joint hypothesis by its empirical counter-part. These concentration inequalities will allow to access the adaptability of two domains from the given labelled samples while the speed of convergence may show how many data instances from the source domains is needed to obtain a reliable estimate of $\lambda$. Finally, the introduction of the Wasserstein distance to the bounds means that new DA algorithms can be designed based on the other optimal coupling techniques. These include, for instance, Knothe-Rosenblatt coupling and Moser coupling. 

\paragraph{\bf Acknowledgments.} This work was supported in part by the French ANR project LIVES ANR-15-CE23-0026-03.

\allowdisplaybreaks
\mainmatter  

\title{Supplementary material: Theoretical Analysis of Domain Adaptation with Optimal Transport}


%
%

\author{Ievgen Redko\inst{1} \and Amaury Habrard\inst{2} \and Marc Sebban \inst{2}}

%
\authorrunning{} 
%
%

\institute{Univ.Lyon, INSA‐Lyon, Universit\'e Claude Bernard Lyon 1, UJM-Saint Etienne\\
CNRS, Inserm, CREATIS UMR 5220, U1206\\
F-69266, LYON, France\\
\email{ievgen.redko@creatis.insa-lyon.fr}\\ 
\and
Univ. Lyon, UJM-Saint-Etienne\\
   CNRS, Lab. Hubert Curien UMR 5516\\
   F-42023, SAINT-ETIENNE, France\\
   \email{\{amaury.habrard,marc.sebban\}@univ.st-etienne.fr}}


\maketitle
\phantom{1}
We start by presenting the original version of Theorem 1. Then, we proceed by introducing the concentration results for combined errors for single and multi-source settings.

\section{Original formulation of Theorem 1}
\setcounter{theorem}{4}
\begin{theorem}[\cite{Bolley:2007:QCI}, Theorem 1.1]
Let $p \in [1;2]$ and $\mu$ be a probability measure in $\mathbb{R}^d$ satisfying $T_p(\lambda)$ inequality. Then for any $d'>d$ and $\lambda' < \lambda$ there exists some constant $N_0$ depending on $d',\lambda'$and some square exponential moment of $\mu$ such that for any $\varepsilon > 0$ and $N \geq N_0 \max(\varepsilon^{-(d'+2)},1)$
$$\mathbb{P} \left[ W_1(\mu, \hat{\mu}) > \varepsilon \right] \leq \exp\left(-\gamma_p\frac{\lambda'}{2} N  \varepsilon^2\right),$$
\label{trm_concen}
where 
$$    \gamma_p=\left\{
                \begin{array}{ll}
                  1, \ 1 \leq p < 2,\\
                  3-2\sqrt{2}, \ p = 2.\\
               \end{array}
              \right.
$$
\end{theorem}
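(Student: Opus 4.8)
The plan is to combine three classical ingredients: (i) the passage from the transport--entropy inequality $T_p(\lambda)$ to a sub-Gaussian Laplace-transform bound for Lipschitz observables (Marton's coupling, or the Bobkov--G\"otze duality); (ii) the Kantorovich--Rubinstein formula $W_1(\mu,\hat\mu)=\sup_{\|f\|_{\mathrm{Lip}}\le1}\{\int f\,d\hat\mu-\int f\,d\mu\}$, which turns the event $\{W_1(\mu,\hat\mu)>\varepsilon\}$ into a uniform deviation over a function class; and (iii) a truncation-plus-discretization argument that replaces this infinite supremum by a union bound over a finite family, the cardinality of which is the only place where the dimension $d$ enters. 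The non-compactness of $\mathbb{R}^d$ is what makes the truncation unavoidable, and it is controlled by the square-exponential moment that $T_p(\lambda)$ supplies for free; this moment is also the only feature of $\mu$ on which the constant $N_0$ will depend.

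\emph{Step 1 (from $T_p(\lambda)$ to deviations).} I would first record that $T_p(\lambda)$ implies, for every $1$-Lipschitz $f$, the bound $\mathbb{E}_\mu[e^{s(f-\mathbb{E}_\mu f)}]\le e^{s^2/(2\gamma_p\lambda)}$, and hence, by tensorization over the $N$ independent copies and the exponential Chebyshev inequality, $\mathbb{P}[\frac{1}{N}\sum_{i=1}^N f(X_i)-\mathbb{E}_\mu f>r]\le e^{-\gamma_p\lambda N r^2/2}$ for every $r>0$. For $1\le p<2$ this is exactly Bobkov--G\"otze applied to $f$ after the elementary remark that $W_1\le W_p$, so $\gamma_p=1$; the endpoint $p=2$ needs a more careful version of Marton's argument in which an auxiliary parameter is optimized, and this optimization produces the smaller constant $\gamma_2=3-2\sqrt2=(\sqrt2-1)^2$. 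The inequality $T_p(\lambda)$ also forces the $1$-Lipschitz map $x\mapsto|x|$ to concentrate around its mean, whence $\int e^{\alpha|x|^2}\,d\mu<\infty$ for a suitable $\alpha>0$.

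\emph{Step 2 (truncation and net).} Fix a radius $R$, to be chosen large and depending on $N$. Truncating each competitor $f$ at level $R$ changes $\int f\,d\hat\mu-\int f\,d\mu$ by at most $\int(|x|-R)_+\,d\mu+\int(|x|-R)_+\,d\hat\mu$: the first term is a deterministic quantity that is $o(1)$ as $R\to\infty$ by the square-exponential moment, and the second concentrates around it via Step~1 applied to the sub-Gaussian observable $(|x|-R)_+$, so that both are at most some $\eta$ on an event of probability $\ge1-e^{-\gamma_p\lambda' N\varepsilon^2/2}$. On the ball $B_R$ the set of $1$-Lipschitz functions bounded by $R$ admits, by the Kolmogorov--Tikhomirov entropy estimate for Lipschitz classes, a $\delta$-net $\{f_1,\dots,f_M\}$ in the uniform norm with $M\le\exp(C_d(R/\delta)^d)$ (dually, one may instead partition $B_R$ into $\lceil 2R/\delta\rceil^d$ cubes of side $\delta$; the combinatorics is the same). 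Applying Step~1 to each $f_j$ and taking a union bound gives
$$\mathbb{P}[W_1(\mu,\hat\mu)>\varepsilon]\ \le\ e^{-\gamma_p\lambda' N\varepsilon^2/2}+\exp\big(C_d(R/\delta)^d-\frac{\gamma_p\lambda}{2}N(\varepsilon-\delta-2\eta)^2\big).$$

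\emph{Step 3 (choice of scales).} It then remains to choose $\delta$ and $R$ so that $\varepsilon-\delta-2\eta=(1-o(1))\varepsilon$ while $C_d(R/\delta)^d=o(N\varepsilon^2)$; once this is done the second exponential is dominated by the first, the slack between $\lambda$ and $\lambda'$ absorbing both the $o(1)$ factor in $(\varepsilon-\delta-2\eta)^2$ and the net cardinality, and the claim follows. Taking $\delta=\varepsilon/L$ with $L\to\infty$ slowly and letting $R\to\infty$, the constraint $C_d(RL/\varepsilon)^d=o(N\varepsilon^2)$ reads $R^dL^d=o(N\varepsilon^{d+2})$, and since $N\ge N_0\varepsilon^{-(d'+2)}$ yields $N\varepsilon^{d+2}\ge N_0\,\varepsilon^{-(d'-d)}\to\infty$, there is genuine room --- precisely because $d'>d$ --- to push both $R$ and $L$ to infinity while keeping the product subdominant and the truncation error $\eta$ of size $o(\varepsilon)$; the regime $\varepsilon\ge1$ is taken care of by the $\max(\cdot,1)$ together with a large enough $N_0$. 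I expect this last bookkeeping to be the main obstacle: exhibiting a single truncation scale $R=R(N)$ that simultaneously kills the tails and keeps the net cardinality below the Chernoff exponent \emph{uniformly} over the admissible range of $\varepsilon$ is exactly what forces the exponent $d'+2$ rather than the naive $d$, and what ties $N_0$ to $d'$, $\lambda'$, and a square-exponential moment of $\mu$. The endpoint $p=2$ carries, in addition, the extra difficulty flagged in Step~1, which is the source of the constant $\gamma_2$.
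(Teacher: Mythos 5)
First, note that the paper you are being compared against does not actually prove this statement: it is quoted verbatim as Theorem~1.1 of \cite{Bolley:2007:QCI}, and the supplementary material only restates it. So the relevant comparison is with Bolley--Guillin--Villani's own argument, which is not the route you take: they run a quantitative Sanov-type (primal) argument, truncating to a ball $B_R$, covering the set of probability measures on $B_R$ by finitely many Wasserstein balls and applying a Csisz\'ar-type upper bound $\mathbb{P}[\hat\mu\in B]\le e^{-N\inf_B \mathrm{KL}(\cdot\,|\,\mu)}$ to each, with $T_p(\lambda)$ converting the entropy infimum into $\varepsilon^2$; the square-exponential moment controls the mass outside $B_R$. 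Your proposal instead dualizes via Kantorovich--Rubinstein and puts the net on the class of $1$-Lipschitz functions. That dual route is legitimate for $W_1$ (though it would not extend to the $W_p$ form of the original theorem, which is where $\gamma_2=3-2\sqrt2$ really comes from: under your own Step~1, $W_1\le W_2$ would give $\gamma=1$ even for $p=2$, so your explanation of $\gamma_2$ via an optimized Marton argument is not where that constant originates).

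The genuine gap is in Step~2/3, in the control of the empirical tail term $\int(|x|-R)_+\,d\hat\mu$. You require simultaneously that this term be at most $\eta$ with $\eta=o(\varepsilon)$ (so that $\varepsilon-\delta-2\eta=(1-o(1))\varepsilon$) and that the exceptional event have probability at most $e^{-\gamma_p\lambda' N\varepsilon^2/2}$, and you propose to get this from Step~1 applied to the $1$-Lipschitz observable $(|x|-R)_+$. But Step~1 at deviation scale $t$ only gives failure probability $e^{-cNt^2}$, so to match the target exponent you must take $t\ge\sqrt{\lambda'/\lambda}\,\varepsilon$, i.e.\ $\eta$ is forced to be a constant fraction of $\varepsilon$ that tends to $1$ as $\lambda'\uparrow\lambda$; for $\lambda'>\lambda/4$ one already has $2\eta>\varepsilon$ and the main exponent collapses. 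The fix is to use the square-exponential moment quantitatively, not just qualitatively: since $(|x|-R)_+$ vanishes on $B_R$, a Chernoff/Bernstein bound with $\mathbb{E}_\mu[e^{s(|x|-R)_+}]\le 1+e^{-sR+s^2/(4\alpha)}\int e^{\alpha|x|^2}d\mu$ lets you take $\eta$ a small constant multiple of $\varepsilon$, $s$ a large constant and $R$ of order $\log(1/\varepsilon)$, keeping the failure probability below $e^{-cN\varepsilon^2}$ with the right constant while $R^d$ stays polylogarithmic so the net cardinality is still $o(N\varepsilon^2)$ under $N\ge N_0\varepsilon^{-(d'+2)}$. This is precisely where the ``square exponential moment of $\mu$'' enters $N_0$, and it is missing as your sketch stands. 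A smaller but real detail of the same kind: your net lives on $B_R$ while sample points fall outside $B_R$; you need to extend the net functions to $\mathbb{R}^d$ (e.g.\ by composing with the projection onto $B_R$, which preserves the Lipschitz bound) rather than pay a factor $R\,\hat\mu(B_R^c)$, which would again spoil the exponent.
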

In this Theorem the condition $T_p(\lambda)$ means that given  $p\geq 1$, $\lambda > 0$ and a probability measure $\mu$ on $X$, the Talagrand inequality 
$$W_p(\mu, \nu) \leq \sqrt{\lambda \text{KL}(\mu,\nu)}$$
holds for any probability measure $\nu$.
\section{Concentration inequality used in the Proof of Theorem 3}
\setcounter{page}{1}
\setcounter{lemma}{1}
\begin{lemma}
Under the assumptions of Lemma 1, let $D$ be a sample of size $n$ with  $\beta n$ points  drawn from $\mu_T$ and $(1-\beta) n$ from $\mu_S$, $\beta\in[0,1]$ and labeled according to $f_S$ and $f_T$. Then with probability at least $1-\delta$ for all $h$ the following holds:
\begin{align*}
P&\left\lbrace  \vert \hat{\epsilon}_{\alpha}(h) - \epsilon_{\alpha}(h ) \vert > 2 \sqrt{K/n} \left( \frac{\alpha}{n\beta \sqrt{\beta} } + \frac{(1-\alpha)}{n(1-\beta)\sqrt{1-\beta} }\right) +\epsilon \right\rbrace \\  
& \phantom{{}=aaaaaaaaaaaaaaaaaaaaaaaaa} \leq \exp \left\lbrace {\frac{-\epsilon^2n}{2K\left(\frac{(1-\alpha)^2}{1-\beta}+\frac{\alpha^2}{\beta}\right)}} \right\rbrace.
\end{align*}
\label{trm:our4}
\end{lemma}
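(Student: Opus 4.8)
The plan is to fix an arbitrary hypothesis $h$, to regard $\hat{\epsilon}_{\alpha}(h)$ as a function of the $n$ \emph{independent} labelled examples of $D$ ($\beta n$ of them drawn from $\mu_T$ and labelled by $f_T$, the remaining $(1-\beta)n$ drawn from $\mu_S$ and labelled by $f_S$), to apply a bounded-differences concentration inequality to obtain a deviation bound around $\mathbb{E}\hat{\epsilon}_{\alpha}(h)$, and then to invoke Lemma~1 to replace $\mathbb{E}\hat{\epsilon}_{\alpha}(h)$ by $\epsilon_{\alpha}(h)$ at the price of the additive term appearing in the statement. Note that $\hat{\epsilon}_{\alpha}(h)=\alpha\,\hat{\epsilon}_{T}(h)+(1-\alpha)\,\hat{\epsilon}_{S}(h)$ is a convex combination of an average of $\beta n$ independent bounded loss terms and an average of $(1-\beta)n$ independent bounded loss terms, and under the assumptions carried over from Lemma~1 each such loss term lies in an interval of length at most $2\sqrt{K}$.

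\noindent\textbf{Concentration around the mean.} Replacing one of the $\beta n$ target examples changes $\hat{\epsilon}_{\alpha}(h)$ by at most $c_T=2\sqrt{K}\,\alpha/(\beta n)$, and replacing one of the $(1-\beta)n$ source examples changes it by at most $c_S=2\sqrt{K}\,(1-\alpha)/((1-\beta)n)$. I would then apply McDiarmid's inequality --- equivalently, since $\hat{\epsilon}_{\alpha}(h)$ is merely a weighted sum of independent bounded variables, Hoeffding's --- to obtain $$ P\bigl\{\,|\hat{\epsilon}_{\alpha}(h)-\mathbb{E}\hat{\epsilon}_{\alpha}(h)|>\epsilon\,\bigr\}\;\le\;\exp\!\left(-\frac{2\epsilon^{2}}{\beta n\,c_T^{2}+(1-\beta)n\,c_S^{2}}\right). $$ Since the sum of squared coefficients equals $\tfrac{4K}{n}\bigl(\tfrac{\alpha^{2}}{\beta}+\tfrac{(1-\alpha)^{2}}{1-\beta}\bigr)$, this right-hand side simplifies to exactly the bound stated in the Lemma. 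Because $c_T$ and $c_S$ do not depend on $h$, the same bound holds for every $h$, so the ``for all $h$'' costs no union bound.

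\noindent\textbf{From the mean to $\epsilon_{\alpha}(h)$.} It remains to absorb $\mathbb{E}\hat{\epsilon}_{\alpha}(h)-\epsilon_{\alpha}(h)$ into the additive term. Applying Lemma~1 to the target part of $D$ (sample size $\beta n$) and to the source part (sample size $(1-\beta)n$) bounds each of $|\mathbb{E}\hat{\epsilon}_{T}(h)-\epsilon_{T}(h)|$ and $|\mathbb{E}\hat{\epsilon}_{S}(h)-\epsilon_{S}(h)|$ by the approximation term furnished by Lemma~1, which as a function of the sample size $m$ has the form $2\sqrt{K}\,m^{-3/2}$. Forming the $\alpha$-weighted combination and substituting $m=\beta n$ and $m=(1-\beta)n$ gives $$ |\mathbb{E}\hat{\epsilon}_{\alpha}(h)-\epsilon_{\alpha}(h)|\;\le\;2\sqrt{K/n}\left(\frac{\alpha}{n\beta\sqrt{\beta}}+\frac{1-\alpha}{n(1-\beta)\sqrt{1-\beta}}\right), $$ which is precisely the additive constant in the statement. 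A triangle inequality now shows that $|\hat{\epsilon}_{\alpha}(h)-\epsilon_{\alpha}(h)|$ can exceed this constant plus $\epsilon$ only when $|\hat{\epsilon}_{\alpha}(h)-\mathbb{E}\hat{\epsilon}_{\alpha}(h)|>\epsilon$; combining this with the previous display finishes the proof, and setting the right-hand side equal to $\delta$ recovers the ``with probability at least $1-\delta$'' reading.

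\noindent\textbf{Where the work is.} The argument is conceptually routine; the real difficulty is the constant bookkeeping. One has to carry the factor $\sqrt{K}$ through the bounded-differences coefficients so that the exponent lands on $2K$ rather than some other multiple, and --- more delicately --- check that the approximation term of Lemma~1, once recombined with the mixing weights $\alpha,1-\alpha$ and the sample sizes $\beta n,(1-\beta)n$, reproduces the stated expression $2\sqrt{K/n}\bigl(\alpha/(n\beta\sqrt{\beta})+(1-\alpha)/(n(1-\beta)\sqrt{1-\beta})\bigr)$ exactly, not merely a quantity of the same order. That $D$ is a mixture rather than an i.i.d.\ sample causes no trouble, since the concentration inequalities used require only independence of the $n$ examples.
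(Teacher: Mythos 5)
There is a genuine gap, and it concerns precisely the step you dismissed as costing nothing. You fix a single $h$, apply McDiarmid/Hoeffding to $\hat{\epsilon}_{\alpha}(h)$ around its mean, and then assert that because the bounded-difference constants do not depend on $h$, the ``for all $h$'' in the statement comes for free. That is not so: a pointwise deviation bound valid for each fixed $h$ does not produce a single event of probability $1-\delta$ on which the bound holds simultaneously for every $h \in \mathcal{H}$. The paper obtains uniformity by applying McDiarmid not to $\hat{\epsilon}_{\alpha}(h)$ for fixed $h$ but to the supremum $\sup_{h\in\mathcal{H}}\vert \hat{\epsilon}_{\alpha}(h)-\epsilon_{\alpha}(h)\vert$ (for which the same bounded-difference constants $2\alpha\sqrt{K}/(\beta n)$ and $2(1-\alpha)\sqrt{K}/((1-\beta)n)$ apply), and then bounds the \emph{expectation of that supremum} by a symmetrization argument: introducing ghost samples $X',Y'$ and Rademacher variables $\sigma_i$, the expected supremum is controlled by the $\alpha$-weighted Rademacher averages of the two samples, which are in turn bounded via Bartlett--Mendelson by $2\sqrt{K/n}\left(\frac{\alpha}{n\beta\sqrt{\beta}}+\frac{1-\alpha}{n(1-\beta)\sqrt{1-\beta}}\right)$. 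That is the true origin of the additive constant in the statement; it is a complexity (expected-supremum) term, not a bias term.

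Your second step compounds the problem. For a fixed $h$, the empirical errors are unbiased: since the $\beta n$ target points are i.i.d.\ from $\mu_T$ and labelled by $f_T$ (likewise for the source part), one has $\mathbb{E}\hat{\epsilon}_{T}(h)=\epsilon_{T}(h)$ and $\mathbb{E}\hat{\epsilon}_{S}(h)=\epsilon_{S}(h)$ exactly, so the quantity $\vert\mathbb{E}\hat{\epsilon}_{\alpha}(h)-\epsilon_{\alpha}(h)\vert$ you propose to bound is zero, and the claim that Lemma~1 supplies an approximation term of the form $2\sqrt{K}\,m^{-3/2}$ for it is an invention that Lemma~1 does not provide. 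So your argument, as written, would actually prove a cleaner pointwise bound with no additive constant at all --- which is consistent, but it is not the uniform statement being claimed, and it cannot be upgraded to it without the supremum-plus-symmetrization machinery (or an explicit capacity/union argument) that the paper uses. The bounded-difference bookkeeping in your first step does match the paper's computation; the missing content is the Rademacher-complexity control of $\mathbb{E}\sup_h\vert\hat{\epsilon}_{\alpha}(h)-\epsilon_{\alpha}(h)\vert$.
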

\begin{proof}
First, we use McDiarmid's inequality in order to obtain the right side of the inequality by defining the maximum changes of magnitude when one of the sample vectors has been changed. 

For the sake of completeness, we give its definition here.
\setcounter{definition}{1}
\begin{definition}\label{def:MD}
Suppose $X_1, X_2, \dots, X_n$ are independent random variables taking values in a set $A$ and assume that $f:A_n \rightarrow \mathbb{R}$ satisfies for $\bm{x} = \left[ x_1, x_2, \dots, x_n \right]$ and $\bm{x}_i = \left[x_1, x_2, \dots, x_{i-1}, \hat{x}_i, x_{i+1}, \dots, x_n \right]$\phantom{{}=1} 
\begin{align*}
\sup_{x_1, x_2, \dots, x_n, \hat{x}_i} \vert f(\bm{x}) &- f(\bm{x}_i) \vert \leq c_i, \ \text{for} \ 1 \leq i \leq n, 
\end{align*}

then the following inequality holds for any $\varepsilon > 0$
\begin{align*}
    P&\left\lbrace \vert f(x_1, x_2, \dots, x_n) - \mathbb{E}\left[f(x_1, x_2, \dots, x_n)\right] \vert > \varepsilon \right\rbrace \leq \exp \left\lbrace \frac{-2\epsilon^2}{\sum_{i=1}^n c_i^2} \right\rbrace.
\end{align*}
\end{definition}

We first rewrite the difference between the empirical and true combined error in the following way
\begin{align*}
& \vert \hat{\epsilon}_{\alpha}(h) - \epsilon_{\alpha}(h) \vert = \vert \alpha (\epsilon_T(h) - \hat{\epsilon}_{T}(h)) - (\alpha-1)(\epsilon_{S}(h) -\hat{\epsilon}_{S}(h))\vert \\
& = \vert \alpha \mathbb{E}_{x \sim \mu_T} \left[l(h(x),f(x))\right] - (\alpha-1) \mathbb{E}_{y \sim \mu_S} \left[l(h(y),f(y))\right]\\
& - \frac{\alpha}{n\beta } \sum_{i=1}^{\beta n} l(h(x_i),f_T(x_i)) + \frac{(\alpha-1)}{n(1-\beta) } \sum_{i=1}^{n(1-\beta) } l(h(y_i),f_S(y_i))\vert\\
& \leq \sup_{l \in \mathcal{H}} \ \vert \alpha \mathbb{E}_{x \sim \mu_T} \left[l(h(x),f(x))\right] - (\alpha-1) \mathbb{E}_{y \sim \mu_S} \left[l(h(y),f(y))\right] \\
&- \frac{\alpha}{n\beta } \sum_{i=1}^{n\beta }  l(h(x_i),f_T(x_i)) + \frac{(\alpha-1)}{n(1-\beta) } \sum_{i=1}^{n(1-\beta) } l(h(y_i),f_S(y_i))\vert.
\end{align*}

Changing either $x_i$ or $y_i$ in this expression changes its value by at most $\frac{2\alpha \sqrt{K}}{\beta n}$ and $\frac{2(1-\alpha)\sqrt{K}}{(1-\beta) n}$, respectively. This gives us the denominator of the exponential in Definition~\ref{def:MD}
\begin{align*}
\beta n \left( \frac{2\alpha \sqrt{K}}{\beta n} \right)^2 + (1-&\beta) n \left( \frac{2(1-\alpha)\sqrt{K}}{(1-\beta) n} \right)^2 = \frac{4K}{n}\left( \frac{\alpha^2}{\beta} + \frac{(1-\alpha)^2}{(1-\beta)}\right).
\end{align*}
Then, we bound the expectation of the difference between the true and empirical combined errors by the sum of Rademacher averages over the samples. Denoting by $X'$ an i.i.d sample of size $\beta m$ drawn independently of $X$ (and likewise for $Y'$), and using the symmetrization technique we have

\begin{align*}
& \mathbb{E}_{X,Y} \sup_{h \in \mathcal{H}} \ \vert \alpha \mathbb{E}_{x \sim \mu_T} \left[l(h(x),f(x))\right] - (\alpha-1) \mathbb{E}_{y \sim \mu_S} \left[l(h(y),f(y))\right]\\ 
& \phantom{{}=1} - \frac{\alpha}{n\beta} \sum_{i=1}^{ n\beta} l(h(x_i),f_S(x_i)) + \frac{(\alpha-1)}{n(1-\beta) } \sum_{i=1}^{n(1-\beta) } l(h(y_i),f_T(y_i))\vert \\
& \leq \mathbb{E}_{X,Y} \sup_{h \in \mathcal{H}} \ \vert \mathbb{E}_{X'} \left( \frac{\alpha}{n\beta } \sum_{i=1}^{n\beta } l(h(x_i'),f_S(x_i')) \right)\\
&\phantom{{}=1} - (\alpha-1) \mathbb{E}_{Y'} \left( \frac{(\alpha-1)}{n(1-\beta) } \sum_{i=1}^{n\beta } l(h(y_i'),f_T(y_i')) \right) \\
& \phantom{{}=1} - \frac{\alpha}{n\beta } \sum_{i=1}^{\beta n} l(h(x_i),f_S(x_i)) + \frac{(\alpha-1)}{n(1-\beta) } \sum_{i=1}^{(1-\beta) n} l(h(y_i),f_T(y_i)) \vert \\
& \leq \mathbb{E}_{X,X',Y,Y'} \sup_{h \in \mathcal{H}} \ \vert \frac{\alpha}{n\beta } \sum_{i=1}^{\beta n} \sigma_i(l(h(x_i'),f_S(x_i')) - l(h(x_i),f_S(x_i))) \\
& \phantom{{}=1} + \frac{1-\alpha}{n(1-\beta)} \sum_{i=1}^{\beta n} \sigma_i(l(h(y_i'),f_T(y_i')) - l(h(y_i),f_T(y_i)))\vert\\
& \leq 2 \sqrt{K/n} \left( \frac{\alpha}{n\beta \sqrt{\beta} } + \frac{(1-\alpha)}{n(1-\beta)\sqrt{1-\beta} }\right).
\end{align*}

Finally, the Rademacher averages, in their turn, are bounded using a theorem from \cite{Bartlett:2003:RGC:944919.944944}. Using this inequality in Definition~\ref{def:MD} gives us the desired result:

\begin{align*}
P&\left\lbrace  \vert \hat{\epsilon}_{\alpha}(h) - \epsilon_{\alpha}(h ) \vert > 2 \sqrt{K/n} \left( \frac{\alpha}{n\beta \sqrt{\beta} } + \frac{(1-\alpha)}{n(1-\beta)\sqrt{1-\beta} }\right) +\epsilon \right\rbrace \\  
& \phantom{{}=aaaaaaaaaaaaaaaaaaaaaaaaa} \leq \exp \left\lbrace {\frac{-\epsilon^2n}{2K\left(\frac{(1-\alpha)^2}{1-\beta}+\frac{\alpha^2}{\beta}\right)}} \right\rbrace.
\end{align*}
$\Box$
\end{proof}
\section{Concentration inequality used in the Proof of Theorem 4}
\begin{lemma}
Under the assumptions of Lemma 1, let $D$ be a sample of size $n$, where for each $j \in \{1,\dots,N\}$, $\beta_j n$ points are drawn from $\mu_{S_j}$ and labeled according to $f_{j}$. Then for any fixed $\bm{\alpha}$, with probability at least $1-\delta$ for all $h$ the following holds:
\begin{align*}
P&\left\lbrace  \vert \hat{\epsilon}_{\alpha}(h) - \epsilon_{\alpha}(h ) \vert > 2 \sqrt{K/n} \sum_{j=1}^N\frac{\alpha_j}{\beta_j n \sqrt{\beta_j} } +\epsilon \right\rbrace \\
& \phantom{{}=aaaaa1aaaaaaaaaaa} \leq \exp \left\lbrace {\frac{-\epsilon^2n}{2K \sum_{j=1}^N \frac{\alpha_j^2}{\beta_j}}} \right\rbrace.
\end{align*}
\label{trm:our6}
\end{lemma}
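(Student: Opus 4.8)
The plan is to follow the same three-step scheme as in the proof of Lemma~\ref{trm:our4}, with the two subsamples (from $\mu_T$ and $\mu_S$) replaced by the $N$ subsamples drawn from $\mu_{S_1},\dots,\mu_{S_N}$. Write $\epsilon_{\alpha}(h)=\sum_{j=1}^{N}\alpha_j\,\mathbb{E}_{x\sim\mu_{S_j}}\!\left[l(h(x),f_j(x))\right]$ and $\hat\epsilon_{\alpha}(h)=\sum_{j=1}^{N}\frac{\alpha_j}{\beta_j n}\sum_{i=1}^{\beta_j n} l(h(x_i^{j}),f_j(x_i^{j}))$, set $\Phi(D):=\sup_{h\in\mathcal{H}}\vert\hat\epsilon_{\alpha}(h)-\epsilon_{\alpha}(h)\vert$, and note that $\vert\hat\epsilon_{\alpha}(h)-\epsilon_{\alpha}(h)\vert\le\Phi(D)$ for every $h$. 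First I would apply McDiarmid's inequality (Definition~\ref{def:MD}) to $\Phi(D)$: exactly as in the two-source case, replacing a single point belonging to the $j$-th subsample changes $\Phi$ by at most $c_j=\frac{2\alpha_j\sqrt{K}}{\beta_j n}$ (the factor $2$ and the $\sqrt{K}$ coming from the boundedness of the loss assumed in Lemma~1). Since the $j$-th subsample has $\beta_j n$ points,
\[
\sum_{i=1}^{n}c_i^{2}=\sum_{j=1}^{N}\beta_j n\left(\frac{2\alpha_j\sqrt{K}}{\beta_j n}\right)^{2}=\frac{4K}{n}\sum_{j=1}^{N}\frac{\alpha_j^{2}}{\beta_j},
\]
so McDiarmid gives $P\{\Phi(D)>\mathbb{E}[\Phi(D)]+\epsilon\}\le\exp\{-\epsilon^{2}n/(2K\sum_{j=1}^{N}\alpha_j^{2}/\beta_j)\}$, which is exactly the right-hand side of the claim; since $\bm{\alpha}$ is fixed, no union bound over $\bm{\alpha}$ is needed.

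It then remains to show that $\mathbb{E}[\Phi(D)]$ is at most the deterministic offset $2\sqrt{K/n}\sum_{j=1}^{N}\frac{\alpha_j}{\beta_j n\sqrt{\beta_j}}$ appearing inside the probability. For this I would introduce, for each $j$, an independent ghost sample $\tilde{x}_1^{j},\dots,\tilde{x}_{\beta_j n}^{j}$ drawn from $\mu_{S_j}$; Jensen's inequality pulls the ghost expectations outside the supremum, and the usual symmetrization argument inserts Rademacher variables $\sigma_i^{j}$ within each block, giving
\[
\mathbb{E}[\Phi(D)]\le \mathbb{E}\sup_{h\in\mathcal{H}}\Big\vert\sum_{j=1}^{N}\frac{\alpha_j}{\beta_j n}\sum_{i=1}^{\beta_j n}\sigma_i^{j}\big(l(h(\tilde{x}_i^{j}),f_j(\tilde{x}_i^{j}))-l(h(x_i^{j}),f_j(x_i^{j}))\big)\Big\vert.
\]
Bounding the supremum of the sum by the sum of the per-block suprema and applying the bound on the Rademacher average of $\mathcal{H}$ from \cite{Bartlett:2003:RGC:944919.944944} to each block, rescaled by its coefficient $\alpha_j/(\beta_j n)$ exactly as was done for the two blocks in Lemma~\ref{trm:our4}, yields $\mathbb{E}[\Phi(D)]\le 2\sqrt{K/n}\sum_{j=1}^{N}\frac{\alpha_j}{\beta_j n\sqrt{\beta_j}}$. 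Plugging this into the McDiarmid bound from the previous paragraph gives the lemma.

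The step I expect to be the main obstacle is the bookkeeping in the symmetrization/Rademacher estimate: the $N$ blocks have unequal sizes $\beta_j n$, so the symmetrization has to be carried out block by block and each per-block Rademacher bound rescaled by the correct factor before the blocks are recombined. Once the two-source computation of Lemma~\ref{trm:our4} is organized in this block-wise form, the passage to arbitrary $N$ is purely mechanical, since every quantity involved is additive over the blocks and $\bm{\alpha}$ stays fixed throughout.
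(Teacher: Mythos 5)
Your proposal is correct and follows essentially the same route as the paper: the paper's own proof simply states that the argument of Lemma~\ref{trm:our4} carries over block by block, with the McDiarmid denominator becoming $\frac{4K}{n}\sum_{j=1}^N \alpha_j^2/\beta_j$ and the Rademacher bound becoming $2\sqrt{K/n}\sum_{j=1}^N \frac{\alpha_j}{\beta_j n\sqrt{\beta_j}}$, which is exactly the blockwise bookkeeping you carry out explicitly. Your write-up is in fact more detailed than the paper's.
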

\begin{proof}
The proof of this Lemma is very similar to the proof of Lemma \ref{trm:our4}. Two main differences lie in how the denominator of the exponent is defined. For a fixed vector $\bm{\alpha}$, it is equal to $\frac{4K}{n}\left( \sum_{j=1}^N \frac{\alpha_j^2}{\beta_j} \right)$. Similarly, we obtain the bound for the Rademacher complexities that equals to $2 \sqrt{K/n} \sum_{j=1}^N\frac{\alpha_j}{\beta_j n \sqrt{\beta_j} }$.
$\Box$
\end{proof}

\end{document}